\DeclareMathOperator{\cw}{{\scriptstyle\mathcal{W}}}
\DeclareMathOperator{\bcw}{{\boldsymbol{\scriptstyle\mathcal{W}}}}
\DeclareMathOperator{\w}{\boldsymbol{w}}
\DeclareMathOperator{\x}{\boldsymbol{x}}
\DeclareMathOperator{\s}{\boldsymbol{s}}
\theoremstyle{plain}
\newtheorem{assumption}{Assumption}
\newtheorem{theorem}{Theorem}
\newtheorem{lemma}{Lemma}
\let\OLDthebibliography\thebibliography
\renewcommand\thebibliography[1]{
  \OLDthebibliography{#1}
  \setlength{\parskip}{0pt}
  \setlength{\itemsep}{1pt plus 0.6ex}
}
\title{Exact Subspace Diffusion for Decentralized Multitask Learning}
\name{Shreya Wadehra\(^{\star}\), Roula Nassif\(^{\dagger}\) and Stefan Vlaski\(^{\star}\)}
\address{\(^{\star}\) Department of Electrical and Electronic Engineering, Imperial College London, UK\thanks{Emails: shreya.wadehra22@imperial.ac.uk, roula.nassif@unice.fr, s.vlaski@imperial.ac.uk}\\\(^{\dagger}\) I3S Laboratory, Universit\'e C\^ote d'Azur, France}
\begin{document}
\maketitle
\begin{abstract}
  Classical paradigms for distributed learning, such as federated or decentralized gradient descent, employ consensus mechanisms to enforce homogeneity among agents. While these strategies have proven effective in i.i.d. scenarios, they can result in significant performance degradation when agents follow heterogeneous objectives or data. Distributed strategies for multitask learning, on the other hand, induce relationships between agents in a more nuanced manner, and encourage collaboration without enforcing consensus. We develop a generalization of the exact diffusion algorithm for subspace constrained multitask learning over networks, and derive an accurate expression for its mean-squared deviation when utilizing noisy gradient approximations. We verify numerically the accuracy of the predicted performance expressions, as well as the improved performance of the proposed approach over alternatives based on approximate projections.
\end{abstract}
\begin{keywords}
  Decentralized learning, federated learning, multitask learning, bias-correction, subspace constraints.
\end{keywords}
\section{Introduction}\label{sec:intro}
We consider a collection of \( K \) agents, indexed by \( k \), where each agent is equipped with a local risk \( J_k(w_k) \) of the form:
\begin{align}
  J_k(w_k) = \mathds{E} Q_k(w_k; \x_k)
\end{align}
Here, \( w_k \in \mathds{R}^M \) denotes some parameterization chosen by agent \( k \) and \( \x_k \) describes locally available data, modeled as a random variable. The loss \( Q_k(w_k; \x_k) \) then measures how well \( w_k \) fits to the data \( \x_k \), and the risk \( J_k(w_k) \) measures the expected loss.

In a networked setting, where agents have the ability to communicate and exchange information, we may then consider various learning paradigms. As a baseline, in a non-cooperative mode of operation, agents pursue locally optimal models independently by minimizing their local risks:
\begin{align}\label{eq:noncooperative}
  w_k^o \triangleq \arg\min_{w_k \in \mathds{R}^M} J_k(w_k)
\end{align}
An alternative to non-cooperative approaches is the consensus optimization problem~\cite{Tsitsiklis86, Bertsekas97parallel, Nedic09, Chen13, Kairouz21, Sayed22}:
\begin{align}\label{eq:singletask}
  w^o \triangleq \arg\min_{w \in \mathds{R}^M} \frac{1}{K} \sum_{k=1}^K J_k(w)
\end{align}
In contrast to the independent learning objectives in~\eqref{eq:noncooperative}, local objectives are coupled in~\eqref{eq:singletask} into a single task of coming to consensus on an optimal average model \( w^o \). For this reason, the problem of consensus optimization is also referred to as single-task learning. Depending on restrictions on the flow of information, solutions to~\eqref{eq:singletask} may be pursued using a number of different learning strategies, ranging from centralized or parallel~\cite{Bertsekas97parallel} to federated~\cite{Kairouz21} or fully-decentralized structures~\cite{Tsitsiklis86, Nedic09, Chen13, Sayed14, Shi15, Yuan18, DiLorenzo16, Mokhtari16,Koloskova20:AUnifiedTheoryOfDecentralized}.

When agents are homogeneous, meaning that their local data \( \x_k \) are identically and independently distributed, and they employ the same loss \( Q_k(\cdot; \cdot) \), it follows that the models \( w_k^o \) and \( w^o \) coincide. In such settings, \( K \)-fold improvement in performance has been established for a number of different algorithms, and a number of different performance metrics, including the mean-squared deviation for convex risks~\cite{Chen15Performance} or first-order stationarity~\cite{Lian17} and second-order stationarity~\cite{Vlaski19nonconvexP2} for non-convex risks. 

In heterogeneous settings, performance trade-offs become more nuanced, since solving the consensus optimization problem~\eqref{eq:singletask} induces a bias relative to locally optimal models~\eqref{eq:noncooperative}. Depending on the level of heterogeneity in the network, this bias can outweigh any benefit gained from cooperation, and result in a consensus model \( w^o \), which is optimal on average, but performs poorly on any local objective. This observation has motivated the inclusion of tools from multitask learning~\cite{Caruana97} in both federated~\cite{Smith17:FederatedMulti-TaskLearning} and decentralized settings~\cite{Barbarossa09:DistributedSignalSubspace,Koppel17:ProximityWithoutConsensus, Nassif20, Nassif20:LearningOverMultitaskGraphs,Nassif20:AdaptationAndLearningOverNetworksUnderSubspaceConstraintsI, DiLorenzo20:DistributedSignalProcessing,Kayaalp22:Dif-MAML}. Strategies for distributed multitask learning couple local objectives without enforcing strict consensus. While various models have been considered in the literature, we will focus in this work on subspace constrained multitask learning problems of the form~\cite{Barbarossa09:DistributedSignalSubspace,Nassif20:AdaptationAndLearningOverNetworksUnderSubspaceConstraintsI,DiLorenzo20:DistributedSignalProcessing}:
\begin{align}\label{eq:subspace_problem}
  \cw^{\star} \triangleq \arg\min_{\cw \in \mathds{R}^{KM} } \sum_{k=1}^K J_k(w_k) \ \textrm{subject to } \cw \in \mathcal{R}\left( \mathcal{U} \right)
\end{align}
Here, we denote by \( \mathcal{R}\left( \mathcal{U} \right) \) the range space of some matrix \( \mathcal{U} \in \mathds{R}^{KM \times P} \) with full column-rank and introduce the network quantity \( \cw \), which is obtained by stacking the individual models \( w_k \) as in:
\begin{align}
  \cw \triangleq \mathrm{col}\left\{ w_k \right\} 
\end{align}
Note that choosing \( \mathcal{U} = I_{KM} \) results in \( \mathcal{R}\left( \mathcal{U} \right) = \mathds{R}^{KM} \), and hence
\begin{align}
  \cw^{\star} \triangleq \arg\min_{\cw \in \mathds{R}^{KM} } \sum_{k=1}^K J_k(w_k)
\end{align}
decouples into the independent local optimization problems of~\eqref{eq:noncooperative}. In this case, \( \cw^{\star} \) coincides with \( \cw^{o} \triangleq \mathrm{col}\left\{ w_k^o \right\} \), which is obtained by stacking the solutions \( w_k^o \) of~\eqref{eq:noncooperative}. On the other hand, if we let \( \mathcal{U} = \mathds{1}_K \otimes I_M \), it can be verified~\cite{Nassif20:AdaptationAndLearningOverNetworksUnderSubspaceConstraintsI} that:
\begin{align}
  \cw \in \mathcal{R}\left( \mathds{1}_K \otimes I_M \right) \Longleftrightarrow w_{k} = w_{\ell} \ \textrm{for all } k, \ell.
\end{align}
Hence, in this case~\eqref{eq:subspace_problem} reduces to the consensus optimization problem~\eqref{eq:singletask}. Other choices of \( \mathcal{U} \) result in alternative task-relationship priors by restricting models \( w_k \) contained in \( \cw \) to lie in a lower-dimensional linear subspace spanned by \( \mathcal{U} \). For example, setting \( \mathcal{U} \) to the leading eigenvectors of the graph Fourier transform of an underlying graph~\cite{Dong20} results in bandlimited recovery, while other choices of \( \mathcal{U} \) can be used to encode overlap or pairwise linear constraints between agents. We refer the reader to~\cite{DiLorenzo20:DistributedSignalProcessing,Nassif20:AdaptationAndLearningOverNetworksUnderSubspaceConstraintsI} for further examples and details on how to match the choice of subspace constraint to the underlying signal processing, learning or optimization problem.

\section{Problem Formulation and Related Works}
\subsection{Network Model}
Each agent in the network is represented by a node in a graph, where edges between agents represent communication links, meaning that information may be exchanged between this pair of agents. We denote by \( \mathcal{N} \) the set of all agents, and by \( \mathcal{N}_{k} \) the neighborhood of agent \( k \). In other words, \( \ell \in \mathcal{N}_{k} \) implies that there is an edge from agent \( \ell \) to agent \( k \). We will assume the graph to be undirected, meaning that \( \ell \in \mathcal{N}_k \Longleftrightarrow k \in \mathcal{N}_{\ell}\).

\subsection{Approximate Projection-Based Algorithms}
We begin by briefly reviewing the derivations and algorithms of~\cite{Nassif20:AdaptationAndLearningOverNetworksUnderSubspaceConstraintsI,DiLorenzo20:DistributedSignalProcessing}, which are closely related to our proposed algorithm, before pointing out their limitations. Since \( \mathcal{R}(\mathcal{U}) \) denotes a linear subspace, its projection can be determined in closed form, and is expressed as:
\begin{align}
  \mathcal{P}_{\mathcal{U}} = \mathcal{U} {\left( \mathcal{U}^{\mathsf{T}}\mathcal{U} \right)}^{-1} \mathcal{U}^{\mathsf{T}}
\end{align}
It is then, at least in principle, possible to pursue \( \cw^{\star} \) in~\eqref{eq:subspace_problem} via projected gradient descent, which takes the form:
\begin{align}\label{eq:deterministic_centralized}
  \cw_i = \mathcal{P}_{\mathcal{U}} \left( \cw_{i-1} - \mu \nabla \mathcal{J}(\cw_{i-1}) \right)
\end{align}
where we defined:
\begin{align}
  \nabla \mathcal{J}(\cw_{i-1}) \triangleq \mathrm{col}\left\{ \nabla J_k(w_{k, i-1}) \right\}
\end{align}
Two factors limit the applicability of~\eqref{eq:deterministic_centralized} in networked learning environments. First, the projection matrix \( \mathcal{P}_{\mathcal{U}} \) is in general dense, meaning that an iteration of~\eqref{eq:deterministic_centralized} would require central aggregation of the local models \( w_{k, i-1} \) contained in \( \cw_{i-1} \). Second, a step of projected gradient descent requires local access to the exact gradients \( \nabla J_k(w_{k, i-1}) = \nabla \mathds{E} Q(w_{k, i-1}; \x_k) \), which in turn requires knowledge of the distribution of \( \x_k \). This is unavailable when learning from finite or streaming realizations of data. The first limitation is addressed in both~\cite{Nassif20:AdaptationAndLearningOverNetworksUnderSubspaceConstraintsI} and~\cite{DiLorenzo20:DistributedSignalProcessing} by replacing the dense projection matrix by a block-sparse approximation \( \mathcal{A} \), which satisfies:
\begin{align}
  \lim_{i \to \infty} \mathcal{A}^i =&\: \mathcal{P}_{\mathcal{U}} \label{eq:limiting} \\
  A_{\ell k} \triangleq&\: [\mathcal{A}]_{\ell k} = 0,\ \ \ \textrm{if}\ \ell \notin \mathcal{N}_k \label{eq:sparsity}
\end{align}
Here, \( [\mathcal{A}]_{\ell k} \) denotes \( \ell k \)-th block of \( \mathcal{A} \) of dimension \( M \times M \). It has been shown that equivalent conditions on \( \mathcal{A} \) are given by~\cite{Barbarossa09:DistributedSignalSubspace,Nassif20:AdaptationAndLearningOverNetworksUnderSubspaceConstraintsI,DiLorenzo20:DistributedSignalProcessing}:
\begin{align}
  \mathcal{A} \mathcal{P}_{\mathcal{U}} =&\: \mathcal{P}_{\mathcal{U}} \label{eq:prop_ap}\\
  \mathcal{P}_{\mathcal{U}} \mathcal{A} =&\: \mathcal{P}_{\mathcal{U}} \label{eq:prop_pa}\\
  \lambda_{\mathcal{A}} \triangleq&\: \rho\left( \mathcal{P}_{\mathcal{U}} - \mathcal{A} \right) < 1 \label{eq:prop_spectral}
\end{align}
Valid choices of \( \mathcal{A} \) can be constructed for a given \( \mathcal{U} \) and sufficiently connected network topology by solving a convex optimization problem~\cite{DiLorenzo20:DistributedSignalProcessing,Nassif20:AdaptationAndLearningOverNetworksUnderSubspaceConstraintsI}. For simplicity, we will assume throughout that \( \mathcal{A} \) is constructed to be symmetric. If not, we could simply make the replacement \( \mathcal{A} \Leftarrow \frac{1}{2}\left( \mathcal{A} + \mathcal{A}^{\mathsf{T}} \right) \). The algorithm of~\cite{Nassif20:AdaptationAndLearningOverNetworksUnderSubspaceConstraintsI} is obtained by directly replacing \( \mathcal{P}_{\mathcal{U}} \) by \( \mathcal{A} \) in~\eqref{eq:deterministic_centralized} to obtain:
\begin{align}\label{eq:ours_network}
  \cw_i = \mathcal{A} \left( \cw_{i-1} - \mu \nabla \mathcal{J}(\cw_{i-1}) \right)
\end{align}
We may return to node-level quantities by exploiting the block-structures of \( \left\{ \mathcal{A}, \cw_{i-1}, \mathcal{J}(\cw_{i-1}) \right\} \) to obtain:
\begin{align}\label{eq:ours_node}
  w_{k, i} = \sum_{\ell \in \mathcal{N}_{k}} A_{\ell k} \left( w_{\ell, i-1} - \mu \nabla J_{\ell}(w_{\ell, i-1}) \right)
\end{align}
The limiting condition~\eqref{eq:limiting} ensures that for small step-sizes \( \mu \), recursion~\eqref{eq:ours_network} approximates~\eqref{eq:deterministic_centralized}, while the sparsity condition~\eqref{eq:sparsity} on the other hand ensures that~\eqref{eq:ours_network} or~\eqref{eq:ours_node} can be implemented by relying only on communication exchanges between neighboring agents. The DiSPO algorithm of~\cite{DiLorenzo20:DistributedSignalProcessing}, on the other hand, applies \( \mathcal{A} \) only to \( \cw_{i-1} \), resulting in:
\begin{align}\label{eq:paolo_network}
  \cw_i = \mathcal{A} \cw_{i-1} - \mu \nabla \mathcal{J}(\cw_{i-1})
\end{align}
or in local quantities:
\begin{align}\label{eq:paolo_node}
  w_{k, i} = \sum_{\ell \in \mathcal{N}_{k}} A_{\ell k} w_{\ell, i-1} - \mu \nabla J_k(w_{k, i-1})
\end{align}
We remark that~\eqref{eq:ours_node} and~\eqref{eq:paolo_node} can be viewed as generalizations of the Adapt-then-Combine (ATC) diffusion algorithm~\cite{Chen13, Sayed14} and distributed gradient descent~\cite{Tsitsiklis86} respectively, where the typical scalar combination weights are replaced by linear transformations \( A_{\ell k} \). It is precisely these linear transformations that allow the decentralized algorithms~\eqref{eq:ours_node} and~\eqref{eq:paolo_node} to solve generic subspace constrained problems of the form~\eqref{eq:subspace_problem}, rather than consensus problems of the form~\eqref{eq:singletask}.

The second limitation is addressed in~\cite{Nassif20:AdaptationAndLearningOverNetworksUnderSubspaceConstraintsI} by replacing true gradients in~\eqref{eq:ours_network} by stochastic approximations \( \widehat{\nabla J}_k(\w_{k, i-1}) \), based on data available at time \( i \). This results in:
\begin{align}
  \bcw_i = \mathcal{A} \left( \bcw_{i-1} - \mu \widehat{\nabla \mathcal{J}}(\bcw_{i-1}) \right) 
\end{align}
or
\begin{align}\label{eq:ours_node_stochastic}
  \w_{k, i} = \sum_{\ell \in \mathcal{N}_{k}} A_{\ell k} \left( \w_{\ell, i-1} - \mu \widehat{\nabla J}_{\ell}(\w_{\ell, i-1}) \right)
\end{align}
where we now employ bold font for \( \bcw_{i} \) or \( \w_{k, i} \) to emphasize that iterates will be random.

\subsection{Primal-Dual Algorithms}
Both ATC-diffusion and the decentralized gradient descent algorithm for consensus optimization are known to exhibit a fixed-point bias~\cite{Chen15transient, Yuan16:OnTheConvergence}. This has motivated a number of approaches for bias-correction in the context of consensus optimization using arguments based on Lagrangian duality and gradient tracking~\cite{Shi15, Yuan18, DiLorenzo16}. Although bias-correction was originally motivated in deterministic settings, its potential benefit has also been established in stochastic settings~\cite{Yuan20}.

A similar bias has been observed in the context of subspace constrained optimization for~\eqref{eq:paolo_network} in~\cite{DiLorenzo20:DistributedSignalProcessing}, and can be verified for~\eqref{eq:ours_network} using similar arguments. These considerations motivate the development of bias-corrected strategies for decentralized subspace constrained optimization. A bias-corrected version of DiSPO for deterministic optimization, termed EDiSPO, is provided in~\cite{DiLorenzo20:DistributedSignalProcessing} by adjusting the arguments that lead to EXTRA~\cite{Shi15}. The recent work~\cite{Marquis22} provides generalizations of a large number of bias-corrected algorithms for consensus optimization to the subspace constrained setting and provides convergence and sensitivity analysis in the presence of i.i.d. perturbations using the integral quadratic constraint framework. In relation to these related works, we make the following contributions:
\begin{itemize}[leftmargin=*]
  \setlength\itemsep{0em}
  \item We derive an exact subspace diffusion algorithm by extending the incremental arguments of~\cite{Yuan18} to the subspace constrained setting. Incremental arguments of this type have been shown to yield wider stability ranges~\cite{Sayed14, Yuan18}.
  \item We allow for stochastic gradient approximations in lieu of true gradients, which induces gradient noise. In contrast to the perturbations in~\cite{Marquis22}, this type of gradient noise is Markovian, rather than independent and identically distributed.
  \item We allow for multiple local updates to take place in between every communication exchange. This flexibility can improve communication efficiency in federated and decentralized settings~\cite{Kairouz21,Koloskova20:AUnifiedTheoryOfDecentralized}.
  \item When agents perform a single update per exchange, we derive an expression for the limiting mean-squared deviation of the proposed algorithm, which matches the centralized benchmark, and show numerically that it approximates the true performance over a wide range of conditions.
\end{itemize}

\section{Algorithm Development}
The derivation essentially mirrors that of~\cite{Yuan18}, after accounting for the more general subspace constraints, and allowing for multiple primal updates along the aggregate objective \( \sum_{k=1}^K J_k(w_k) \). To this end, note that~\eqref{eq:subspace_problem} is equivalent to:
\begin{align}
  \cw^{\star} \triangleq \arg\min_{\cw \in \mathds{R}^{KM} } \sum_{k=1}^K J_k(w_k) \ \textrm{s.t. } \left( I_{KM} - \mathcal{P}_{\mathcal{U}} \right) \cw = 0
\end{align}
As long as \( \mathcal{A} \) satisfies~\eqref{eq:limiting}--\eqref{eq:sparsity}, this is further equivalent to~\cite{DiLorenzo20:DistributedSignalProcessing}:
\begin{align}
  \cw^{\star} \triangleq \arg\min_{\cw \in \mathds{R}^{KM} } \sum_{k=1}^K J_k(w_k) \ \textrm{s.t. } \left( I_{KM} - \mathcal{A} \right) \cw = 0
\end{align}
Analogously to~\cite{Yuan18}, we introduce the augmented Lagrangian:
\begin{align}
  \mathcal{L}\left( \cw, \lambda \right) \triangleq \sum_{k=1}^K J_k(w_k) + \frac{1}{\mu} \lambda^{\mathsf{T}} \mathcal{B} \cw + \frac{1}{4 \mu} \cw^{\mathsf{T}} \left(I_{KM}-\mathcal{A}\right) \cw
\end{align}
Here, \( \mathcal{B} \) denotes the square root of the matrix \( \frac{1}{2} \left( I_{KM} - \mathcal{A} \right) = \mathcal{B}\cdot\mathcal{B} \), which exists as long as \( \mathcal{A} \) is symmetric with spectral radius bounded by one. We can pursue a saddle-point to the augmented Lagrangian via incremental stochastic gradient descent-ascent, initializing \( \boldsymbol{\psi}_{i, 0} = \bcw_{i-1} \) and letting:
\begin{align}
  \boldsymbol{\psi}_{i, e} =&\: \boldsymbol{\psi}_{i, e-1} - \frac{\mu}{E} \widehat{\nabla \mathcal{J}}(\boldsymbol{\psi}_{i, e-1}) \ \mathrm{for}\ e = 1, \ldots, E \label{eq:primal_dual_1}\\
  \bcw_i =&\: \overline{\mathcal{A}} \boldsymbol{\psi}_{i, E}  - \mathcal{B} \boldsymbol{\lambda}_{i-1} \label{eq:primal_dual_2}\\
  \boldsymbol{\lambda}_i =&\: \boldsymbol{\lambda}_{i-1} + \mathcal{B} \bcw_{i}\label{eq:primal_dual_3}
\end{align}
where we defined \( \overline{\mathcal{A}} = \frac{1}{2}\left( I_{KM} + \mathcal{A} \right) \). Evaluating~\eqref{eq:primal_dual_2} at time \( i-1 \) yields:
\begin{align}\label{eq:intermediate}
  \bcw_{i-1} =&\: \overline{\mathcal{A}} \boldsymbol{\psi}_{i-1, E} - \mathcal{B} \boldsymbol{\lambda}_{i-2}
\end{align}
Subtracting~\eqref{eq:intermediate} from~\eqref{eq:primal_dual_2}, and using~\eqref{eq:primal_dual_3}:
\begin{align}
  &\: \bcw_i - \bcw_{i-1} \notag \\
  =&\: \overline{\mathcal{A}} \boldsymbol{\psi}_{i, E} - \overline{\mathcal{A}} \boldsymbol{\psi}_{i-1, E} - \mathcal{B} \left(  \boldsymbol{\lambda}_{i-1} - \boldsymbol{\lambda}_{i-2} \right) \notag\\
  =&\: \overline{\mathcal{A}} \boldsymbol{\psi}_{i, E} - \overline{\mathcal{A}} \boldsymbol{\psi}_{i-1, E} - \mathcal{B}^2 \bcw_{i-1} \notag\\
  =&\: \overline{\mathcal{A}} \boldsymbol{\psi}_{i, E} - \overline{\mathcal{A}} \boldsymbol{\psi}_{i-1, E} - \frac{1}{2} \bcw_{i-1} + \frac{1}{2} \mathcal{A} \bcw_{i-1}
\end{align}
After rearranging, we have:
\begin{align}
  \bcw_i = \overline{\mathcal{A}} \left( \bcw_{i-1} + \boldsymbol{\psi}_{i, E} - \boldsymbol{\psi}_{i-1, E} \right)
\end{align}
Upon returning to local quantities, we obtain the proposed exact subspace diffusion algorithm with local updates in Algorithm~\ref{alg:proposed}. We remark that setting \( E = 1 \) and \( A_{\ell k} = a_{\ell k} I_M \), we recover the exact diffusion algorithm of~\cite{Yuan18}, which justifies the name.
\begin{algorithm}
  Initialize \( \w_{k, 0} \) arbitrary and \( \boldsymbol{\psi}_{k, 0, E} = \w_{k, 0} \). Set:
  \begin{align}
    \overline{\mathcal{A}} = \frac{1}{2} \left( \mathcal{A} + I_{KM} \right)
  \end{align}
  For \( i \ge 1 \), set \( \boldsymbol{\psi}_{k, i, 0} = \w_{k, i-1} \) and perform \( E \) local updates for \( e = 1, \dots, E \):
  \begin{align}
    \boldsymbol{\psi}_{k, i, e} =&\: \boldsymbol{\psi}_{k, i, e-1} - \frac{\mu}{E} \widehat{\nabla {J}}_k(\boldsymbol{\psi}_{k, i, e-1})\label{eq:proposed_1}
  \end{align}
  Correct:
  \begin{align}
    \boldsymbol{\phi}_{k, i} = \w_{k, i-1} + \boldsymbol{\psi}_{k, i, E} - \boldsymbol{\psi}_{k, i-1, E} 
  \end{align}
  Exchange and transform:
  \begin{align}
    \w_{k, i} = \sum_{\ell \in \mathcal{N}_k} \overline{A}_{\ell k} \boldsymbol{\phi}_{\ell, i}\label{eq:proposed_3}
  \end{align}
  \caption{Exact Subspace Diffusion}\label{alg:proposed}
\end{algorithm}

\section{Convergence Analysis}
For simplicity, in this section, we will restrict ourselves to single local updates \( E = 1 \). We can then write~\eqref{eq:primal_dual_1}--\eqref{eq:primal_dual_3} compactly as:
\begin{align}
  \bcw_i =&\: \overline{\mathcal{A}} \bcw_{i-1} - \mu \overline{\mathcal{A}} \widehat{\nabla \mathcal{J}}(\bcw_{i-1}) - \mu \mathcal{B} \boldsymbol{\lambda}_{i-1} \label{eq:compact_1}\\
  \boldsymbol{\lambda}_i =&\: \boldsymbol{\lambda}_{i-1} + \mathcal{B} \bcw_{i}\label{eq:compact_2}
\end{align}
Throughout this section, we will be interested in quantifying the mean-squared deviation (MSD) of the network iterates \( \bcw_i \) around \( \cw^{\star} \), defined by~\eqref{eq:subspace_problem}, namely \( \mathds{E} {\|\cw^{\star} - \bcw_i\|}^2\). The network iterates \( \bcw_i \) satisfy the decomposition:
\begin{align}
  \bcw_i = \mathcal{P}_{\mathcal{U}} \bcw_i + \left( I_{KM} - \mathcal{P}_{\mathcal{U}} \right) \bcw_i = \bcw_i^{\mathcal{U}} + \bcw_i^{\perp \mathcal{U}}\label{eq:decompositoin}
\end{align}
Here, we defined:
\begin{align}
  \bcw_i^{\mathcal{U}} \triangleq&\: \mathcal{P}_{\mathcal{U}} \bcw_i \\
  \bcw_i^{\perp \mathcal{U}} \triangleq&\: \left( I_{KM} - \mathcal{P}_{\mathcal{U}} \right) \bcw_i
\end{align}
Since for any projection \( \mathcal{P}_{\mathcal{U}} = \mathcal{P}_{\mathcal{U}}^{2} \), it can be readily verified that \( \bcw_i^{\mathcal{U}} \) and \( \bcw_i^{\perp \mathcal{U}} \) are orthogonal. Using this, and the fact that \( \cw^{\star} \in \mathcal{R}(\mathcal{U})\), it follows that:
\begin{align}
  \mathds{E} {\|\cw^{\star} - \bcw_i\|}^2 =&\: \mathds{E} {\|\cw^{\star} - \bcw_i^{\mathcal{U}} - \bcw_i^{\perp \mathcal{U}}\|}^2 \notag \\
  =&\: \mathds{E} {\|\cw^{\star} - \bcw_i^{\mathcal{U}} \|}^2  + \mathds{E} {\| \bcw_i^{\perp \mathcal{U}}\|}^2 
\end{align}
We may then equivalently express the mean-squared deviation \( \mathds{E} {\|\cw^{\star} - \bcw_i\|}^2 \) by instead quantifying the orthogonal contributions \( \mathds{E} {\|\cw^{\star} - \bcw_i^{\mathcal{U}} \|}^2 \) and \(  \mathds{E} {\| \bcw_i^{\perp \mathcal{U}}\|}^2 \), which is a common theme in the study of decentralized learning algorithms~\cite{Chen15transient,Sayed14,Nassif20:AdaptationAndLearningOverNetworksUnderSubspaceConstraintsI,DiLorenzo20:DistributedSignalProcessing}. Applying the projector \( \mathcal{P}_{\mathcal{U}} \) to~\eqref{eq:compact_1} yields:
\begin{align}
  \bcw_i^{\mathcal{U}} =&\: \mathcal{P}_{\mathcal{U}}\overline{\mathcal{A}} \bcw_{i-1} - \mu \mathcal{P}_{\mathcal{U}}\overline{\mathcal{A}} \widehat{\nabla \mathcal{J}}(\bcw_{i-1}) - \mu \mathcal{P}_{\mathcal{U}}\mathcal{B} \boldsymbol{\lambda}_{i-1} \notag\\
  \stackrel{(a)}{=}&\: \mathcal{P}_{\mathcal{U}} \left( \bcw_{i-1} - \mu\widehat{\nabla \mathcal{J}}(\bcw_{i-1}) \right) \notag \\
  =&\: \bcw_{i-1}^{\mathcal{U}} - \mu \mathcal{P}_{\mathcal{U}} \widehat{\nabla \mathcal{J}}(\bcw_{i-1}) \notag\\
  =&\: \mathcal{P}_{\mathcal{U}} \left( \bcw_{i-1}^{\mathcal{U}} - \mu \widehat{\nabla \mathcal{J}}(\bcw_{i-1}) \label{eq:centroid} \right)
\end{align}
where \( (a) \) follows from the spectral structure of \( \mathcal{A} \), induced by the conditions~\eqref{eq:limiting}--\eqref{eq:sparsity}~\cite{Barbarossa09:DistributedSignalSubspace,DiLorenzo20:DistributedSignalProcessing,Nassif20:AdaptationAndLearningOverNetworksUnderSubspaceConstraintsI}, ensuring that:
\begin{align}
  \mathcal{P}_{\mathcal{U}}\overline{\mathcal{A}} =&\: \frac{1}{2} \mathcal{P}_{\mathcal{U}} \left( I_{KM} + {\mathcal{A}} \right) = \mathcal{P}_{\mathcal{U}} \\
  \mathcal{P}_{\mathcal{U}} \mathcal{B} =&\: \mathcal{P}_{\mathcal{U}} {\left( \frac{1}{2} \left(I_{KM} - \mathcal{A} \right)\right)}^{\frac{1}{2}} = 0\label{eq:intermeiate_prop}
\end{align}
In interpreting recursion~\eqref{eq:centroid}, it is useful to consider a centralized benchmark. In the absence of communication constraints, one could pursue a solution to~\eqref{eq:subspace_problem} via projected stochastic gradient descent, yielding the recursion:
\begin{align}
  \bcw_i^{\mathrm{cent}} =&\: \mathcal{P}_{\mathcal{U}} \left( \bcw_{i-1}^{\mathrm{cent}} - \mu  \widehat{\nabla \mathcal{J}}(\bcw_{i-1}^{\mathrm{cent}}) \right)\label{eq:centralized}
\end{align}
Comparing~\eqref{eq:centralized} to~\eqref{eq:centroid}, we observe that the recursions are almost identical, except that the stochastic gradients in~\eqref{eq:centroid} are evaluated at \( \bcw_{i-1} \) instead of \(\bcw_{i-1}^{\mathcal{U}} \). If \( \bcw_{i-1} \approx \bcw_{i-1}^{\mathcal{U}}\), and under suitable smoothness conditions on \( \widehat{\nabla \mathcal{J}}(\cdot) \), it is then reasonable to expect recursion~\eqref{eq:centroid} to track the centralized benchmark~\eqref{eq:centralized}. In light of~\eqref{eq:decompositoin}, the deviation \( \bcw_i - \bcw_i^{\mathcal{U}} \) is given by \( \bcw_i^{\perp \mathcal{U}} \triangleq \left( I_{KM} - \mathcal{P}_{\mathcal{U}} \right) \bcw_i \). Applying \( \left( I_{KM} - \mathcal{P}_{\mathcal{U}} \right) \) to~\eqref{eq:compact_1}, we find:
\begin{align}
  &\: \bcw_i^{\perp\mathcal{U}} \notag \\
  =&\: \left( I_{KM} - \mathcal{P}_{\mathcal{U}} \right)\overline{\mathcal{A}} \bcw_{i-1} - \mu \left( I_{KM} - \mathcal{P}_{\mathcal{U}} \right)\overline{\mathcal{A}} \widehat{\nabla \mathcal{J}}(\bcw_{i-1}) \notag \\
  &\:- \mu \left( I_{KM} - \mathcal{P}_{\mathcal{U}} \right)\mathcal{B} \boldsymbol{\lambda}_{i-1} \notag \\
  \stackrel{\eqref{eq:prop_pa}}{=}&\: \left( \overline{\mathcal{A}} - \mathcal{P}_{\mathcal{U}} \right) \bcw_{i-1} - \mu \left( \overline{\mathcal{A}} - \mathcal{P}_{\mathcal{U}} \right)\widehat{\nabla \mathcal{J}}(\bcw_{i-1}) - \mu \mathcal{B} \boldsymbol{\lambda}_{i-1} \notag \\
  \stackrel{\eqref{eq:prop_ap}}{=}&\: \left( \overline{\mathcal{A}} - \mathcal{P}_{\mathcal{U}} \right)\left( I_{KM} - \mathcal{P}_{\mathcal{U}}\right) \bcw_{i-1} - \mu \left( \overline{\mathcal{A}} - \mathcal{P}_{\mathcal{U}} \right)\widehat{\nabla \mathcal{J}}(\bcw_{i-1}) \notag \\
  &\:- \mu \mathcal{B} \boldsymbol{\lambda}_{i-1} \notag \\
  {=}&\: \left( \overline{\mathcal{A}} - \mathcal{P}_{\mathcal{U}} \right)\bcw_{i-1}^{\perp\mathcal{U}} - \mu \left( \overline{\mathcal{A}} - \mathcal{P}_{\mathcal{U}} \right)\widehat{\nabla \mathcal{J}}(\bcw_{i-1}) - \mu \mathcal{B} \boldsymbol{\lambda}_{i-1}\label{eq:deviation}
\end{align}
In light of~\eqref{eq:prop_spectral}, we have:
\begin{align}
  \lambda_{\overline{\mathcal{A}}} \triangleq&\: \rho \left( \overline{\mathcal{A}} - \mathcal{P}_{\mathcal{U}} \right) = \rho \left( \frac{1}{2} \left( I_{KM} + {\mathcal{A}} \right) - \mathcal{P}_{\mathcal{U}} \right) \notag \\
  \le&\: \frac{1}{2} \rho \left( I_{KM} - \mathcal{P}_{\mathcal{U}} \right) + \frac{1}{2} \rho \left( \mathcal{A} - \mathcal{P}_{\mathcal{U}} \right) \notag \\
  \le&\: \frac{1}{2} + \frac{1}{2} \lambda_{\mathcal{A}} < 1
\end{align}
It follows that the recursion~\eqref{eq:deviation} in the deviation \( \bcw_i^{\perp\mathcal{U}} \) is driven by a stable matrix. Some care needs to be taken, since the recursions~\eqref{eq:centroid} and~\eqref{eq:deviation} are coupled through the driving term \( \widehat{\nabla \mathcal{J}}(\bcw_{i-1}) \), and~\eqref{eq:deviation} is additionally driven by the dual variable \( \boldsymbol{\lambda}_{i-1}\).

\subsection{Stability}
We introduce the following common modeling conditions on the objectives \( J_k(\cdot) \) and gradient approximations \( \widehat{\nabla J}_k(\cdot) \)~\cite{Sayed14,Yuan20, Nassif20:AdaptationAndLearningOverNetworksUnderSubspaceConstraintsI}.
\begin{assumption}[\textbf{Conditions on \( J_k(\cdot) \)}] Each local objective \( J_k(\cdot) \) is \( \nu_k \)-strongly convex with \( \delta_k \)-Lipschitz gradients, meaning that for every \( x, y \in \mathds{R}^M\), it holds that:
  \begin{align}
    {\left(\nabla J_k(x) - \nabla J_k(y)\right)}^{\mathsf{T}}(x-y) \ge&\: \nu_k \|x - y\|^2\\
    \|\nabla J_k(x) - \nabla J_k(y)\| \le&\: \delta_k \|x - y\|
  \end{align}
  for some \( 0 < \nu_k < \delta_k \).\qed\label{as:regularity}
\end{assumption}
\begin{assumption}[\textbf{Conditions on \(\widehat{\nabla J}_k(\cdot) \)}] Define the local gradient noise process:
  \begin{align}
    \s_{k, i}(\w_{k, i-1}) \triangleq \widehat{\nabla J}_k(\w_{k, i-1}) - {\nabla J}_k(\w_{k, i-1})\label{eq:gradient_noise_process}
  \end{align}
The gradient noise is zero-mean after conditioning on the current iterate:
\begin{align}
  \mathds{E} \left\{ \s_{k, i}(\w_{k, i-1}) | \w_{k, i-1} \right\} = \s_{k, i}(\w_{k, i-1})
\end{align}
Furthermore, its variance is bounded according to:
\begin{align}
  \mathds{E} \left\{ \|\s_{k, i}(\w_{k, i-1})\|^2 | \w_{k, i-1} \right\} \le \beta_k^2 \|w_k^{\star} - \w_{k, i-1}\|^2 + \sigma_k^2
\end{align}
Finally, the gradient noise processes between agents \( k \neq \ell \) are uncorrelated after conditioning on current iterates:
\begin{align}
  \mathds{E} \left\{ \s_{k, i}(\w_{k, i-1})\s_{\ell, i}(\w_{\ell, i-1})^{\mathsf{T}} | \w_{k, i-1}, \w_{\ell, i-1} \right\} = 0
\end{align}\label{as:gradient_noise}\qed
\end{assumption}
\noindent A common gradient approximation is given by \( \widehat{\nabla J}_k(\w_{k, i-1}) = \nabla Q(\w_{k, i-1}; \x_{k, i}) \), where \( \x_{k, i} \) denotes the sample available to agent \( k \) at time \( i \). Alternative constructions, such as mini-batch or asynchronous updates are possible as well. In those cases, and for many loss functions \( Q(\cdot; \cdot) \) arising in learning problems, the conditions in Assumption~\ref{as:gradient_noise} can be verified to hold --- we refer the reader to~\cite{Sayed22, Vlaski22:NetworkedSignalAnd} for details. We can then establish the following stability result.
\begin{lemma}[\textbf{Mean-square stability}] Under Assumptions~\ref{as:regularity}--\ref{as:gradient_noise}, and for symmetric \( \mathcal{A} \) satisfying~\eqref{eq:prop_ap}--\eqref{eq:prop_spectral}, there exists a step-size \( \mu \) small enough, so that the exact subspace diffusion recursions~\eqref{eq:proposed_1}--\eqref{eq:proposed_3} are stable in the mean-square sense, and:
  \begin{align}\label{eq:stability}
    \limsup_{i \to \infty} \mathds{E} \left\| \cw^{\star} - \bcw_i \right\|^2 = O(\mu)
  \end{align}
\end{lemma}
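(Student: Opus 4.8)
The plan is to control the two orthogonal energies $\mathds{E}\|\cw^{\star}-\bcw_i^{\mathcal{U}}\|^2$ and $\mathds{E}\|\bcw_i^{\perp\mathcal{U}}\|^2$ through a single coupled recursion of expected squared norms whose driving matrix becomes stable once $\mu$ is small, and then to read off the $O(\mu)$ steady state from its fixed point. A preliminary step is to characterize the fixed point of \eqref{eq:compact_1}--\eqref{eq:compact_2}: setting increments to zero and using \eqref{eq:intermeiate_prop} yields the first-order optimality condition $\mathcal{P}_{\mathcal{U}}\nabla\mathcal{J}(\cw^{\star})=0$ together with a dual value satisfying $\mathcal{B}\lambda^{\star}=-\overline{\mathcal{A}}\nabla\mathcal{J}(\cw^{\star})$, and, crucially, $\bcw^{\perp\mathcal{U}}=0$ at this point, so that the scheme is unbiased in the deterministic limit.

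For the centroid, I would subtract $\cw^{\star}$ from \eqref{eq:centroid} and expand $\nabla\mathcal{J}(\bcw_{i-1})-\nabla\mathcal{J}(\cw^{\star})=-\mathcal{H}_{i-1}(\cw^{\star}-\bcw_{i-1})$ with an averaged, block-diagonal Hessian obeying $\nu I_{KM}\preceq\mathcal{H}_{i-1}\preceq\delta I_{KM}$ for $\nu=\min_k\nu_k$, $\delta=\max_k\delta_k$, by Assumption~\ref{as:regularity}. Since $\cw^{\star}-\bcw_{i-1}^{\mathcal{U}}$ lies in $\mathcal{R}(\mathcal{U})$, the symmetric operator $\mathcal{P}_{\mathcal{U}}(I_{KM}-\mu\mathcal{H}_{i-1})\mathcal{P}_{\mathcal{U}}$ contracts it with factor $1-\mu\nu$, whereas $\bcw_{i-1}^{\perp\mathcal{U}}\in\mathcal{R}(\mathcal{U})^{\perp}$ is annihilated by the identity part of $\mathcal{P}_{\mathcal{U}}(I_{KM}-\mu\mathcal{H}_{i-1})$ and survives only through $\mu\mathcal{P}_{\mathcal{U}}\mathcal{H}_{i-1}\bcw_{i-1}^{\perp\mathcal{U}}$, i.e.\ an $O(\mu)$ coupling. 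Using the conditional zero-mean property of Assumption~\ref{as:gradient_noise} to eliminate the noise cross-term, its variance bound, and a Jensen split with an $O(\mu)$ parameter yields
\begin{align}
  \mathds{E}\|\cw^{\star}-\bcw_i^{\mathcal{U}}\|^2\le(1-O(\mu))\,\mathds{E}\|\cw^{\star}-\bcw_{i-1}^{\mathcal{U}}\|^2+O(\mu)\,\mathds{E}\|\bcw_{i-1}^{\perp\mathcal{U}}\|^2+O(\mu^2).\notag
\end{align}

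For the deviation I would close the recursion \eqref{eq:deviation} with the dual by stacking $(\bcw_i^{\perp\mathcal{U}},\boldsymbol{\lambda}_i)$; using \eqref{eq:compact_2} and $\mathcal{B}\mathcal{P}_{\mathcal{U}}=0$ (so that $\mathcal{B}\bcw_i=\mathcal{B}\bcw_i^{\perp\mathcal{U}}$) gives, relative to $(0,\lambda^{\star})$, a linear recursion on $\mathcal{R}(\mathcal{U})^{\perp}$ with transition
\begin{align}
  \mathcal{T}=\begin{bmatrix}\overline{\mathcal{A}}-\mathcal{P}_{\mathcal{U}} & -\mu\mathcal{B}\\[3pt]\mathcal{B}(\overline{\mathcal{A}}-\mathcal{P}_{\mathcal{U}}) & I_{KM}-\mu\mathcal{B}^2\end{bmatrix},\notag
\end{align}
driven by $O(\mu)$ gradient terms and $O(\mu)$ noise. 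Because $\mathcal{A}$ commutes with $\mathcal{P}_{\mathcal{U}}$ and acts on $\mathcal{R}(\mathcal{U})^{\perp}$ with eigenvalues $\theta$ bounded in modulus by $\lambda_{\mathcal{A}}<1$, I would diagonalize $\mathcal{A}$ there and reduce $\mathcal{T}$ to a family of $2\times 2$ blocks indexed by $\theta$, each with determinant $\tfrac{1+\theta}{2}$ and trace $\tfrac{1+\theta}{2}+1-\mu\tfrac{1-\theta}{2}$. A characteristic-root computation then shows their eigenvalues are $\tfrac{1+\theta}{2}+O(\mu)$ and $1-\mu+O(\mu^2)$, both strictly inside the unit disk for small $\mu$, so $\rho(\mathcal{T})=1-O(\mu)$. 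This produces a deviation--dual energy inequality with contraction $1-O(\mu)$, an $O(\mu^2)$ back-coupling into the centroid energy, and $O(\mu^2)$ noise forcing.

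Collecting the centroid and deviation--dual inequalities into a two-dimensional linear system with coefficient matrix $\Gamma$ having diagonal $1-O(\mu)$, off-diagonals $O(\mu)$ and $O(\mu^2)$, and forcing $O(\mu^2)$, a direct $2\times 2$ eigenvalue check gives $\rho(\Gamma)<1$ for all sufficiently small $\mu$; iterating and evaluating $(I-\Gamma)^{-1}$ against the $O(\mu^2)$ forcing shows both energies are $O(\mu)$ in the limit, whence $\limsup_i\mathds{E}\|\cw^{\star}-\bcw_i\|^2=O(\mu)$ by the orthogonal decomposition. I expect the main obstacle to be the stability of the coupled block $\mathcal{T}$: it mixes a fast mode contracting like $\lambda_{\overline{\mathcal{A}}}$ with a slow dual mode contracting like $1-\mu$, linked by an $O(1)$ lower-left coupling, so a crude bound on $\rho(\mathcal{T})$ via $\rho(\overline{\mathcal{A}}-\mathcal{P}_{\mathcal{U}})$ and $\|\mu\mathcal{B}\|$ is insufficient, and the per-eigenvalue $2\times 2$ reduction (or an equivalent weighted Lyapunov argument in the spirit of~\cite{Yuan18}) is essential.
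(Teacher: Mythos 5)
Your proposal is correct and follows essentially the route the paper itself points to (the paper omits the details, stating only that the proof ``continues from recursions~\eqref{eq:centroid} and~\eqref{eq:deviation} using arguments similar to~\cite{Yuan18, Yuan20}''): an orthogonal energy decomposition, a fixed-point/unbiasedness check, and a coupled small-\( \mu \) contraction argument with \( O(\mu^2) \) forcing. Your \( 2\times 2 \) spectral reduction of the deviation--dual block is sound --- the determinant \( \tfrac{1+\theta}{2} \), the trace, and the resulting eigenvalues \( 1-\mu+O(\mu^2) \) and \( \tfrac{1+\theta}{2}(1+O(\mu)) \) all check out uniformly over \( |\theta|\le\lambda_{\mathcal{A}}<1 \) --- and you correctly flag the one step that must not be skipped, namely converting \( \rho(\mathcal{T})<1 \) into an energy contraction via a weighted Lyapunov norm, since \( \mathcal{T} \) is not normal.
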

\begin{proof}
  The proof continues from recursions~\eqref{eq:centroid} and~\eqref{eq:deviation} using arguments similar to~\cite{Yuan18, Yuan20} and is omitted due to space limitations. We verify the claim numerically in Section~\ref{sec:numerical}.
\end{proof}
\noindent The \( O(\cdot) \) notations in~\eqref{eq:stability} denotes that the dependence of \( \limsup_{i \to \infty} \mathds{E} \left\| \cw^{\star} - \bcw_i \right\|^2 \) is approximately linear in \( \mu \) for small step-sizes, or more precisely that
\begin{align}
  \lim_{\mu \to 0} \frac{1}{\mu} \limsup_{i \to \infty} \mathds{E} \left\| \cw^{\star} - \bcw_i \right\|^2
\end{align}
tends towards a finite constant as \( \mu \) tends to zero. To develop a more clear understanding of the performance of the proposed algorithm, we derive an expression for this constant in the next section.

\subsection{Performance}
To provide a more granular understanding of the steady-state performance of the proposed algorithm, we will need to introduce additional regularity conditions on the objectives and gradient approximations. Analogous conditions have been used before when deriving performance expressions of single-task learning algorithms~\cite{Sayed14}, multitask learning algorithms~\cite{Nassif20:LearningOverMultitaskGraphs,Nassif20:AdaptationAndLearningOverNetworksUnderSubspaceConstraintsII}, or when establishing the ability of stochastic gradient algorithms to escape from saddle-points~\cite{Vlaski22:Second-OrderGuarantees}.
\begin{assumption}[\textbf{Higher-order smoothness}] The local objectives \( J_k(\cdot) \) are twice-differentiable, with Lipschitz continuous Hessian matrix \( \nabla^2 J_k(\cdot) \) around \( w_{k}^{\star} \) satisfying for all \( x\):
  \begin{align}
    \|\nabla^2 J_k(x) - \nabla^2 J_k(w_k^{\star}) \| \le \kappa_{k} \|x-w_{k}^{\star}\|
  \end{align}
  We define as \( H_k^{\star} \) the Hessian at the optimal solution of~\eqref{eq:subspace_problem}, namely \( H_k^{\star} \triangleq \nabla J_k(w_k^{\star}) \), and \( \mathcal{H}^{\star} \triangleq \mathrm{diag}\left\{ H_k^{\star} \right\} \). As we will see, \( \mathcal{H}^{\star} \) will play a role in the final performance expression.\qed\label{as:hessian}
\end{assumption}
\begin{assumption}[\textbf{Higher-order gradient noise conditions}] The fourth moment of the gradient noise process~\eqref{eq:gradient_noise_process} also satisfies a relative bound of the form:
  \begin{align}
    \mathds{E} \left\{ \|\s_{k, i}(\w_{k, i-1})\|^4 | \w_{k, i-1} \right\} \le \beta_{4, k}^4 \|w_k^{\star} - \w_{k, i-1}\|^4 + \sigma_{4, k}^4
  \end{align}
  We introduce the gradient noise covariance:
  \begin{align}
    R_{s, k}(\w_{k, i-1}) \triangleq \mathds{E} \left\{ \s_{k, i}(\w_{k, i-1})\s_{k, i}(\w_{k, i-1})^{\mathsf{T}} | \w_{k, i-1} \right\}
  \end{align}
  The gradient noise covariance is also assumed to satisfy a smoothness condition around \( w_{k}^{\star} \):
  \begin{align}
    \|R_{s, k}(x) - R_{s, k}(w_{k}^{\star})\| \le \kappa \|x - w_{k}^{\star}\|^{\gamma}
  \end{align}
  where \( \kappa_k \ge 0 \) and \( 0 < \gamma \le 4 \). We again define \( R_{s, k}^{\star} \triangleq R_{s, k} (w_{k}^{\star}) \) and \( \mathcal{R}_s^{\star} \triangleq \mathrm{diag}\left\{ R_{s, k}^{\star} \right\} \).\qed\label{as:covariance}
\end{assumption}
Given these assumptions, we can then provide a precise characterization of the limiting behavior of the algorithm for small step-sizes.
\begin{theorem}[Mean-squared deviation of exact subspace diffusion] Under Assumptions~\ref{as:regularity}--\ref{as:covariance}, the mean-squared deviation satisfies:
\begin{align}
  \limsup_{i \to \infty} \mathds{E} \|\cw^{\star} - \bcw_i\|^2 = \frac{\mu^2}{K} \mathrm{Tr}\left( \sum_{n=0}^{\infty} \mathcal{C}^n \mathcal{Y} \mathcal{C}^{n} \right) + o(\mu)\label{eq:exact}
\end{align}
where \( o(\mu) \) denotes a higher-order term in \( \mu \) and:
\begin{align}
  \mathcal{C} =&\: \mathcal{A}\left( I_{KM} - \mu \mathcal{H}^{\star} \right) \\
  \mathcal{Y} =&\: \mathcal{A} \mathcal{R}_s^{\star} \mathcal{A}
\end{align}
Asymptotically, for small \( \mu \), this may be approximated by:
\begin{align}
  &\:\mu \lim_{\mu \to 0} \frac{1}{\mu} \limsup_{i \to \infty} \mathds{E} \left\| \cw^{\star} - \bcw_i \right\|^2 \notag \\
  =&\: \frac{\mu}{2K} \mathrm{Tr}\left( {\left( \mathcal{U}^{\mathsf{T}} \mathcal{H}^{\star} \mathcal{U} \right)}^{-1} \mathcal{U}^{\mathsf{T}} \mathcal{R}_s^{\star} \mathcal{U} \right)\label{eq:performance}
\end{align}\label{th:performance}
\end{theorem}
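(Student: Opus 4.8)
The plan is to build on the mean-square stability guaranteed by the preceding Lemma, which ensures the steady-state error $\cw^{\star}-\bcw_i$ is $O(\sqrt{\mu})$ in root-mean-square, and to extract the leading-order coefficient of its squared norm through a linearized long-term model in the style of~\cite{Yuan18, Yuan20}. The starting point is the orthogonal decomposition already established, $\mathds{E}\|\cw^{\star}-\bcw_i\|^2 = \mathds{E}\|\cw^{\star}-\bcw_i^{\mathcal{U}}\|^2 + \mathds{E}\|\bcw_i^{\perp\mathcal{U}}\|^2$. From recursion~\eqref{eq:deviation}, the orthogonal deviation $\bcw_i^{\perp\mathcal{U}}$ is driven by the stable matrix $\overline{\mathcal{A}}-\mathcal{P}_{\mathcal{U}}$, whose spectral radius $\lambda_{\overline{\mathcal{A}}}<1$ is independent of $\mu$, and is forced by terms of order $\mu$, so that $\mathds{E}\|\bcw_i^{\perp\mathcal{U}}\|^2 = O(\mu^2)=o(\mu)$ and may be absorbed into the error term of~\eqref{eq:exact}. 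The leading $O(\mu)$ contribution therefore comes entirely from the in-subspace error, which I would analyze through the full primal-dual recursion~\eqref{eq:compact_1}--\eqref{eq:compact_2}.

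Next I would linearize the dynamics about $\cw^{\star}$. Assumption~\ref{as:hessian} permits writing $\nabla\mathcal{J}(\bcw_{i-1}) = \nabla\mathcal{J}(\cw^{\star}) + \mathcal{H}^{\star}(\bcw_{i-1}-\cw^{\star}) + O(\|\cw^{\star}-\bcw_{i-1}\|^2)$, and the constrained optimality condition for~\eqref{eq:subspace_problem} reads $\mathcal{P}_{\mathcal{U}}\nabla\mathcal{J}(\cw^{\star})=0$, so $\nabla\mathcal{J}(\cw^{\star})$ lies in $\mathcal{R}(\mathcal{U})^{\perp}$. The central structural point --- the feature distinguishing the exact algorithm from the biased recursion~\eqref{eq:ours_network} --- is that the deterministic equilibrium of~\eqref{eq:compact_1}--\eqref{eq:compact_2} is exactly $\cw^{\star}$: the dual variable settles at a value whose term $\mu\mathcal{B}\boldsymbol{\lambda}$ absorbs the residual gradient $\nabla\mathcal{J}(\cw^{\star})$, using $\mathcal{A}\mathcal{P}_{\mathcal{U}}=\mathcal{P}_{\mathcal{U}}$ from~\eqref{eq:prop_ap} and $\mathcal{P}_{\mathcal{U}}\mathcal{B}=0$ from~\eqref{eq:intermeiate_prop}. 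Eliminating the dual by differencing~\eqref{eq:compact_1} across two iterations, as in the derivation of Algorithm~\ref{alg:proposed}, and passing to error coordinates, then yields a long-term model governed to leading order by the first-order recursion $\cw^{\star}-\bcw_i = \mathcal{C}(\cw^{\star}-\bcw_{i-1}) + \mu\mathcal{A}\s_i + o(\cdot)$, with transition matrix $\mathcal{C}=\mathcal{A}(I_{KM}-\mu\mathcal{H}^{\star})$ and gradient noise filtered through $\mathcal{A}$. Here Assumption~\ref{as:covariance} is used to replace the state-dependent covariance $R_{s,k}(\bcw_{i-1})$ by its optimal value $R_{s,k}^{\star}$ up to $o(\mu)$, and the cross-agent uncorrelatedness of Assumption~\ref{as:gradient_noise} renders the aggregate covariance block-diagonal, $\mathcal{R}_s^{\star}=\mathrm{diag}\{R_{s,k}^{\star}\}$; the effective input covariance is thus $\mathds{E}[\mathcal{A}\s_i\s_i^{\mathsf{T}}\mathcal{A}^{\mathsf{T}}]=\mathcal{A}\mathcal{R}_s^{\star}\mathcal{A}=\mathcal{Y}$, by symmetry of $\mathcal{A}$.

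With the long-term model in hand, the steady-state covariance $\Sigma=\lim_i\mathds{E}[(\cw^{\star}-\bcw_i)(\cw^{\star}-\bcw_i)^{\mathsf{T}}]$ satisfies the discrete Lyapunov equation $\Sigma=\mathcal{C}\Sigma\mathcal{C}^{\mathsf{T}}+\mu^2\mathcal{Y}$. Since $\rho(\mathcal{C})<1$ --- on $\mathcal{R}(\mathcal{U})^{\perp}$ because $\rho(\mathcal{A})<1$ there, and on $\mathcal{R}(\mathcal{U})$ because $\mathcal{A}$ acts as the identity while $I_{KM}-\mu\mathcal{H}^{\star}$ contracts at rate $1-O(\mu)$ --- the unique solution is the convergent series $\Sigma=\mu^2\sum_{n\ge0}\mathcal{C}^n\mathcal{Y}(\mathcal{C}^{\mathsf{T}})^n$, and evaluating its trace yields~\eqref{eq:exact}. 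To obtain the closed form~\eqref{eq:performance}, I would compute $\lim_{\mu\to0}\frac{1}{\mu}\mathrm{Tr}(\Sigma)$. Splitting the sum over $\mathcal{R}(\mathcal{U})$ and its complement, the complement contributes $O(1)$ to the series and hence only $O(\mu^2)=o(\mu)$ to the MSD, whereas on $\mathcal{R}(\mathcal{U})$ the identity $\mathcal{A}\mathcal{U}=\mathcal{U}$ reduces $\mathcal{C}$ to a gradient-descent map with eigenvalues $1-O(\mu)$. Passing to $\mathcal{U}$-coordinates and invoking the scalar identity $\sum_{n\ge0}(1-\mu h)^{2n}\approx\frac{1}{2\mu h}$ collapses the geometric series into $\frac{1}{2}{(\mathcal{U}^{\mathsf{T}}\mathcal{H}^{\star}\mathcal{U})}^{-1}\mathcal{U}^{\mathsf{T}}\mathcal{R}_s^{\star}\mathcal{U}$, which is~\eqref{eq:performance}.

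The main obstacle is the rigorous justification that every discarded contribution is genuinely $o(\mu)$. This requires a bootstrapping argument: the linearization remainder scales with $\|\cw^{\star}-\bcw_{i-1}\|^2$ and the covariance-fluctuation remainder with $\|\cw^{\star}-\bcw_{i-1}\|^{\gamma}$, so their effect on the MSD is controlled only after combining the fourth-moment bound of Assumption~\ref{as:covariance} with the $O(\mu)$ second-moment bound from the Lemma to dominate the cross terms. Closely related is the need to verify that the eliminated dual variable injects no persistent $O(\sqrt{\mu})$ component into the range --- that is, that the bias correction is exact --- which is precisely where the spectral identities $\mathcal{A}\mathcal{P}_{\mathcal{U}}=\mathcal{P}_{\mathcal{U}}$, $\mathcal{P}_{\mathcal{U}}\mathcal{A}=\mathcal{P}_{\mathcal{U}}$ and $\mathcal{P}_{\mathcal{U}}\mathcal{B}=0$ must be reconciled with the Markovian, rather than i.i.d., nature of the gradient noise.
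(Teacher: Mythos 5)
Your proposal follows the same route the paper indicates for its (omitted) proof: a linearized long-term model around \( \cw^{\star} \) in the style of~\cite{Sayed14,Nassif20:AdaptationAndLearningOverNetworksUnderSubspaceConstraintsII}, with the orthogonal component \( \bcw_i^{\perp\mathcal{U}} \) shown to contribute only \( O(\mu^2) \), the in-subspace error governed by a Lyapunov equation with transition matrix \( \mathcal{C} \) and input covariance \( \mathcal{Y} \), and the low-rank reduction \( \mathcal{A}\mathcal{U}=\mathcal{U} \) collapsing the geometric series to~\eqref{eq:performance}; your identification of the fixed-point cancellation \( \mathcal{B}\lambda^{\star}=-\overline{\mathcal{A}}\nabla\mathcal{J}(\cw^{\star}) \) as the mechanism behind exactness is the right structural observation. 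The only discrepancy is cosmetic: your Lyapunov solution reads \( \sum_n \mathcal{C}^n\mathcal{Y}{(\mathcal{C}^{\mathsf{T}})}^n \), which is the correct form since \( \mathcal{C}=\mathcal{A}(I_{KM}-\mu\mathcal{H}^{\star}) \) need not be symmetric, whereas the theorem statement writes \( \mathcal{C}^n\mathcal{Y}\mathcal{C}^n \).
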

\begin{proof}
  The proof revolves around introducing a long-term model, analogous to~\cite{Sayed14,Nassif20:AdaptationAndLearningOverNetworksUnderSubspaceConstraintsII}, which can be shown to be accurate for small step-sizes and under conditions~\ref{as:hessian}--\ref{as:covariance}. Details are omitted due to space limitations. We verify the accuracy of numerically in Section~\ref{sec:numerical}.
\end{proof}
\noindent We note that for moderately large step-sizes \( \mu \), expression~\eqref{eq:exact} will yield more accurate estimates of the steady-state performance than~\eqref{eq:performance}. Relation~\eqref{eq:performance} on the other hand is more tractable. We can interpret \(  \mathcal{U}^{\mathsf{T}} \mathcal{H}^{\star} \mathcal{U} \) in~\eqref{eq:performance} as the projection of the Hessian onto the space spanned by \( \mathcal{U} \), while \( \mathcal{U}^{\mathsf{T}} \mathcal{R}_s^{\star} \mathcal{U} \) is the projection of the noise covariance onto the same space. In this sense, \( \mathrm{Tr}\left( {\left( \mathcal{U}^{\mathsf{T}} \mathcal{H}^{\star} \mathcal{U} \right)}^{-1} \mathcal{U}^{\mathsf{T}} \mathcal{R}_s^{\star} \mathcal{U} \right) \) is a measure of the inverse signal-to-noise ratio after restricting the signal to the space of feasible solutions in~\eqref{eq:subspace_problem}. It coincides with the performance of centralized benchmark~\eqref{eq:centralized} and the asymptotic performance of~\cite{Nassif20:AdaptationAndLearningOverNetworksUnderSubspaceConstraintsI}. We will verify in Section~\ref{sec:numerical} that~\eqref{eq:exact} is accurate for finite step-sizes \( \mu \), and that the proposed bias-corrected algorithm outperforms the approximate solution of~\cite{Nassif20:AdaptationAndLearningOverNetworksUnderSubspaceConstraintsI}.

\section{Numerical Results}\label{sec:numerical}
We consider the same setting as~\cite[Section~IV]{Nassif20:AdaptationAndLearningOverNetworksUnderSubspaceConstraintsII}, and refer the reader there for a more detailed motivation of the construction. A total of \( K = 50 \) agents are placed uniformly in a \( [0, 1] \times [0, 1] \) square, and weights \( c_{\ell k} \) are assigned between pairs of agents \( \ell \) and \( k \) based on their distance (refer to~\cite[Eq.~(83)]{Nassif20:AdaptationAndLearningOverNetworksUnderSubspaceConstraintsII} for details). Based on these weights, we define the Laplacian matrix \( L = \mathrm{diag}\left\{ C \mathds{1}_{M} \right\} - C \), which in turn defines a graph Fourier transform \( L = V \Lambda V^{\mathsf{T}}\). The local models \( w_k^o \in \mathds{R}^5 \), which make up \( \cw^o \in \mathds{R}^{250} \), are generated by smoothing a randomly generated signal \( \cw \in \mathds{R}^{250} \) through a diffusion kernel defined by the graph Fourier transform of \( L \). This results in a collection of local models \( w_k^o \), which vary smoothly over the graph defined by \( C \). Each agent collects linear observations:
\begin{align}
  \boldsymbol{\gamma}_{k, i} = \boldsymbol{h}_{k, i}^{\mathsf{T}} w_k^{o} + \boldsymbol{v}_{k, i}
\end{align}
where \( \mathds{E} \boldsymbol{h}_{k, i} \boldsymbol{h}_{k, i}^{\mathsf{T}} = \sigma_{h, k}^2 I_5 \), \( \mathds{E} \boldsymbol{v}_{k, i}^2 = \sigma_{v, k}^2 \), and random variables at different agents are independent. The variances \( \sigma_{h, k}^2 \) are sampled from the uniform distribution between \( 0.5 \) and \( 2 \). The local objectives are
\begin{align}
  J_k(w_k) \triangleq \frac{1}{2} \mathds{E}\left( \boldsymbol{\gamma}_{k, i} - \boldsymbol{h}_{k, i}^{\mathsf{T}} w_k \right)^{2}
\end{align}
and we set \( V \) to contain the \( P=3 \) leading eigenvectors of \( U \).

We first illustrate the benefit of bias-correction. To this end, we compare the performance of the approximate projection-based algorithm~\eqref{eq:ours_node_stochastic} of~\cite{Nassif20:AdaptationAndLearningOverNetworksUnderSubspaceConstraintsI} with the proposed exact subspace diffusion algorithms~\eqref{eq:proposed_1}--\eqref{eq:proposed_3} for varying noise profiles. In Fig.~\ref{fig:high_noise}, we sample \( \sigma_{v, k}^2 \) from the uniform distribution between \( 0.2 \) and \( 0.8 \) and observe that both \eqref{eq:ours_node_stochastic} and~\eqref{eq:proposed_1}--\eqref{eq:proposed_3} exhibit similar performance, and match the prediction~\eqref{eq:exact}.
\begin{figure}
  \centering
  \includegraphics[width=\linewidth]{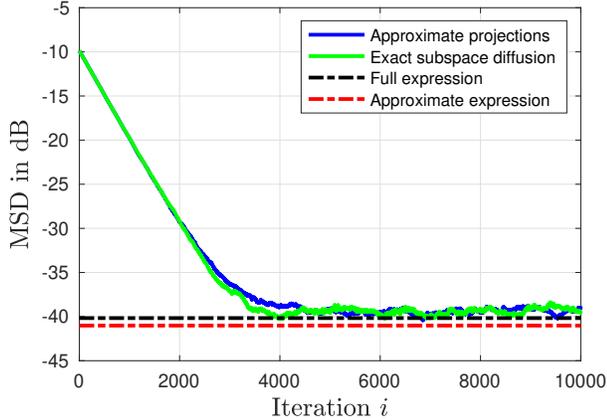}
  \caption{Comparison in the high-noise setting of the approximate projections-based algorithm~\eqref{eq:ours_node_stochastic} and the proposed exact subspace diffusion algorithm~\eqref{eq:proposed_1}--\eqref{eq:proposed_3} with \( E = 1 \). We also show the theoretical predictions based on the full expression~\eqref{eq:exact} and approximation~\eqref{eq:performance}. In a setting with high noise variance \( \sigma_{v, k}^2 \), the performance difference is negligible, since the noise dominates any bias induced by~\eqref{eq:ours_node_stochastic}. Both theoretical expressions match the performance of both algorithms well, with the full expression~\eqref{eq:exact} being more accurate.}\label{fig:high_noise}
\end{figure}

We contrast Fig.~\ref{fig:high_noise} with a second simulation, where the noise power \( \sigma_{v, k}^{2} \) is now sampled from a the uniform distribution between \( 0.2 \cdot 10^{-4} \) and \( 0.8 \cdot 10^{-4} \), and hence much smaller. In Fig.~\ref{fig:low_noise} we observe a significant performance advantage in the proposed approach, while the algorithm~\eqref{eq:ours_node_stochastic} based on approximate projections shows only mild improvement. This bottleneck is due to the bias induced by employing approximate projections, which remains even as the effect of the gradient noise is reduced, becoming increasingly pronounced the noise vanishes. This is consistent with observations made in the context of consensus optimization~\cite{Yuan20}, where bias-correction yields the significant benefit when employing accurate gradient approximations. In all cases, the performance predictions of Theorem~\ref{th:performance} are accurate.
\begin{figure}
  \centering
  \includegraphics[width=\linewidth]{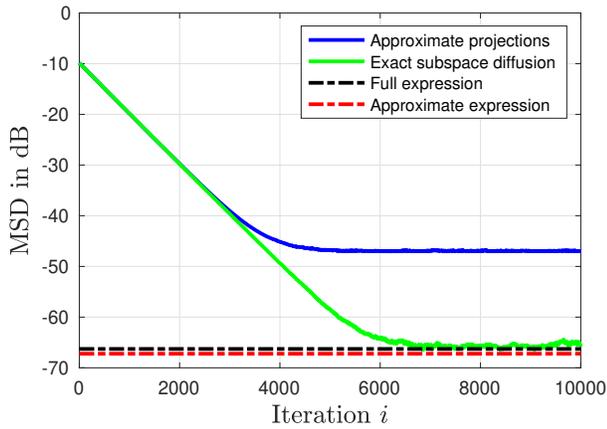}
  \caption{Comparison in the low-noise setting of the approximate projections-based algorithm~\eqref{eq:ours_node_stochastic} and the proposed exact subspace diffusion algorithm~\eqref{eq:proposed_1}--\eqref{eq:proposed_3} with \( E = 1 \). We also show the theoretical predictions based on the full expression~\eqref{eq:exact} and approximation~\eqref{eq:performance}. In a setting with low noise variance \( \sigma_{v, k}^2 \), the performance difference is significant, since the bias induced by~\eqref{eq:ours_node_stochastic} dominates the noise. Both theoretical expressions match the performance of the proposed algorithm well, with the full expression~\eqref{eq:exact} being more accurate.}\label{fig:low_noise}
\end{figure}

Finally, we illustrate the benefit of allowing for multiple local updates in Fig.~\ref{fig:multiple}.
\begin{figure}
  \centering
  \includegraphics[width=\linewidth]{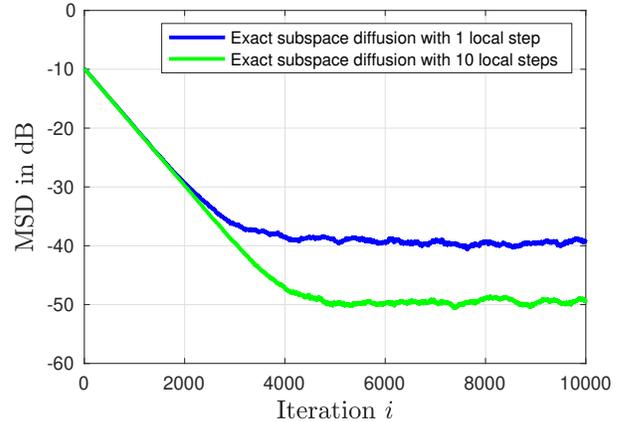}
  \caption{Comparison of the performance of the exact subspace diffusion algorithm~\eqref{eq:proposed_1}--\eqref{eq:proposed_3} with \( E = 1 \) and \( E = 10\). Employing \( 10 \) local updates results in a \( 10 \si{dB}\) gain in performance, corresponding to \( 10 \)-fold reduction in steady-state error.}\label{fig:multiple}
\end{figure}

\section{Conclusion}
We have derived exact subspace diffusion, an algorithm for decentralized subspace constrained multitask learning over networks. The construction removes the bias of approximate schemes using primal-dual arguments, and allows for multiple local updates to allow for improved performance without increasing communication load. The algorithm is complemented with a precise characterization of its steady-state error, and simulations illustrating its advantages as well as the accuracy of performance expressions.

\bibliographystyle{IEEEbib}
{\bibliography{main}}

\begin{thebibliography}{10}

\bibitem{Tsitsiklis86}
J.~Tsitsiklis, D.~Bertsekas, and M.~Athans,
\newblock ``Distributed asynchronous deterministic and stochastic gradient optimization algorithms,''
\newblock {\em IEEE Trans. Automatic Control}, vol. 31, no. 9, pp. 803--812, 1986.

\bibitem{Bertsekas97parallel}
D.~P. Bertsekas and J.~N. Tsitsiklis,
\newblock {\em {Parallel and Distributed Computation: Numerical Methods}},
\newblock Athena Scientific, 1997.

\bibitem{Nedic09}
A.~Nedić and A.~Ozdaglar,
\newblock ``{Distributed subgradient methods for multi-agent optimization},''
\newblock {\em IEEE Trans. Automatic Control}, vol. 54, no. 1, pp. 48--61, Jan 2009.

\bibitem{Chen13}
J.~Chen and A.~H. Sayed,
\newblock ``{Distributed Pareto optimization via diffusion strategies},''
\newblock {\em IEEE Journal of Selected Topics in Signal Processing}, vol. 7, no. 2, pp. 205--220, April 2013.

\bibitem{Kairouz21}
{P. Kairouz, H. B. McMahan, et al.},
\newblock ``Advances and open problems in federated learning,''
\newblock {\em Foundations and Trends in Machine Learning}, vol. 14, pp. 1--210, 2021.

\bibitem{Sayed22}
A.~H. Sayed,
\newblock {\em Inference and Learning from Data},
\newblock Cambridge University Press, 2022.

\bibitem{Sayed14}
A.~H. Sayed,
\newblock ``{Adaptation, learning, and optimization over networks},''
\newblock {\em Foundations and Trends in Machine Learning}, vol. 7, no. 4-5, pp. 311--801, July 2014.

\bibitem{Shi15}
W.~Shi, Q.~Ling, G.~Wu, and W.~Yin,
\newblock ``{EXTRA: An} exact first-order algorithm for decentralized consensus optimization,''
\newblock {\em SIAM Journal on Optimization}, vol. 25, no. 2, pp. 944--966, 2015.

\bibitem{Yuan18}
K.~Yuan, B.~Ying, X.~Zhao, and A.~H. Sayed,
\newblock ``Exact diffusion for distributed optimization and learning -- {Part II}: Convergence analysis,''
\newblock {\em IEEE Trans. Signal Processing}, vol. 67, no. 3, pp. 724--739, Feb 2019.

\bibitem{DiLorenzo16}
P.~Di~Lorenzo and G.~Scutari,
\newblock ``{NEXT}: In-network nonconvex optimization,''
\newblock {\em IEEE Trans. Signal and Information Processing over Networks}, vol. 2, no. 2, pp. 120--136, 2016.

\bibitem{Mokhtari16}
A.~Mokhtari and A.~Ribeiro,
\newblock ``{DSA: D}ecentralized double stochastic averaging gradient algorithm,''
\newblock {\em Journal of Machine Learning Research}, vol. 17, no. 61, pp. 1--35, 2016.

\bibitem{Koloskova20:AUnifiedTheoryOfDecentralized}
A.~Koloskova, N.~Loizou, S.~Boreiri, M.~Jaggi, and S.~Stich,
\newblock ``A unified theory of decentralized {SGD} with changing topology and local updates,''
\newblock in {\em Proceedings of the 37th International Conference on Machine Learning}, Jul 2020, vol. 119, pp. 5381--5393.

\bibitem{Chen15Performance}
J.~Chen and A.~H. Sayed,
\newblock ``On the learning behavior of adaptive networks -- {Part II}: Performance analysis,''
\newblock {\em IEEE Trans. Information Theory}, vol. 61, no. 6, pp. 3518--3548, June 2015.

\bibitem{Lian17}
X.~Lian, C.~Zhang, H.~Zhang, C.-J. Hsieh, W.~Zhang, and J.~Liu,
\newblock ``Can decentralized algorithms outperform centralized algorithms{? A} case study for decentralized parallel stochastic gradient descent,''
\newblock in {\em Advances in Neural Information Processing Systems}, 2017, pp. 5336--5346.

\bibitem{Vlaski19nonconvexP2}
S.~Vlaski and A.~H. Sayed,
\newblock ``Distributed learning in non-convex environments{—Part II: P}olynomial escape from saddle-points,''
\newblock {\em IEEE Trans. Signal Processing}, vol. 69, pp. 1257--1270, 2021.

\bibitem{Caruana97}
R.~Caruana,
\newblock ``Multitask learning,''
\newblock {\em Machine Learning}, vol. 28, no. 1, pp. 41--75, 1997.

\bibitem{Smith17:FederatedMulti-TaskLearning}
V.~Smith, C.-K. Chiang, M.~Sanjabi, and A.~S. Talwalkar,
\newblock ``Federated multi-task learning,''
\newblock in {\em Advances in Neural Information Processing Systems}, 2017, vol.~30.

\bibitem{Barbarossa09:DistributedSignalSubspace}
S.~Barbarossa, G.~Scutari, and T.~Battisti,
\newblock ``Distributed signal subspace projection algorithms with maximum convergence rate for sensor networks with topological constraints,''
\newblock in {\em Proc. IEEE ICASSP}, 2009, pp. 2893--2896.

\bibitem{Koppel17:ProximityWithoutConsensus}
A.~Koppel, B.~M. Sadler, and A.~Ribeiro,
\newblock ``Proximity without consensus in online multiagent optimization,''
\newblock {\em IEEE Transactions on Signal Processing}, vol. 65, no. 12, pp. 3062--3077, 2017.

\bibitem{Nassif20}
R.~Nassif, S.~Vlaski, C.~Richard, J.~Chen, and A.~H. Sayed,
\newblock ``Multitask learning over graphs: An approach for distributed, streaming machine learning,''
\newblock {\em IEEE Signal Processing Magazine}, vol. 37, no. 3, pp. 14--25, 2020.

\bibitem{Nassif20:LearningOverMultitaskGraphs}
R.~Nassif, S.~Vlaski, C.~Richard, and A.~H. Sayed,
\newblock ``Learning over multitask graphs—{Part I: S}tability analysis,''
\newblock {\em IEEE Open Journal of Signal Processing}, vol. 1, pp. 28--45, 2020.

\bibitem{Nassif20:AdaptationAndLearningOverNetworksUnderSubspaceConstraintsI}
R.~Nassif, S.~Vlaski, and A.~H. Sayed,
\newblock ``Adaptation and learning over networks under subspace constraints—{Part I: S}tability analysis,''
\newblock {\em IEEE Transactions on Signal Processing}, vol. 68, pp. 1346--1360, 2020.

\bibitem{DiLorenzo20:DistributedSignalProcessing}
P.~Di~Lorenzo, S.~Barbarossa, and S.~Sardellitti,
\newblock ``Distributed signal processing and optimization based on in-network subspace projections,''
\newblock {\em IEEE Transactions on Signal Processing}, vol. 68, pp. 2061--2076, 2020.

\bibitem{Kayaalp22:Dif-MAML}
M.~Kayaalp, S.~Vlaski, and A.~H. Sayed,
\newblock ``{Dif-MAML: D}ecentralized multi-agent meta-learning,''
\newblock {\em IEEE Open Journal of Signal Processing}, vol. 3, pp. 71--93, 2022.

\bibitem{Dong20}
X.~Dong, D.~Thanou, L.~Toni, M.~Bronstein, and P.~Frossard,
\newblock ``Graph signal processing for machine learning: A review and new perspectives,''
\newblock {\em IEEE Signal Processing Magazine}, vol. 37, no. 6, pp. 117--127, 2020.

\bibitem{Chen15transient}
J.~Chen and A.~H. Sayed,
\newblock ``On the learning behavior of adaptive networks - {Part I}: Transient analysis,''
\newblock {\em IEEE Trans. Information Theory}, vol. 61, no. 6, pp. 3487--3517, June 2015.

\bibitem{Yuan16:OnTheConvergence}
K.~Yuan, Q.~Ling, and W.~Yin,
\newblock ``On the convergence of decentralized gradient descent,''
\newblock {\em SIAM Journal on Optimization}, vol. 26, no. 3, pp. 1835--1854, 2016.

\bibitem{Yuan20}
K.~Yuan, S.~A. Alghunaim, B.~Ying, and A.~H. Sayed,
\newblock ``On the influence of bias-correction on distributed stochastic optimization,''
\newblock {\em IEEE Trans. Signal Processing}, vol. 68, pp. 4352--4367, 2020.

\bibitem{Marquis22}
M.~Marquis, D.~Abou~Jaoude, M.~Farhood, and C.~A. Woolsey,
\newblock ``Robustness and convergence analysis of first-order distributed optimization algorithms over subspace constraints,''
\newblock {\em available as arXiv:2210.16277}, Oct 2022.

\bibitem{Vlaski22:NetworkedSignalAnd}
S.~Vlaski, S.~Kar, A.~H. Sayed, and J.~M.~F. Moura,
\newblock ``Networked signal and information processing,''
\newblock {\em available as arXiv:2210.13767}, Oct 2022.

\bibitem{Nassif20:AdaptationAndLearningOverNetworksUnderSubspaceConstraintsII}
R.~Nassif, S.~Vlaski, and A.~H. Sayed,
\newblock ``Adaptation and learning over networks under subspace constraints—{Part II: P}erformance analysis,''
\newblock {\em IEEE Transactions on Signal Processing}, vol. 68, pp. 2948--2962, 2020.

\bibitem{Vlaski22:Second-OrderGuarantees}
S.~Vlaski and A.~H. Sayed,
\newblock ``Second-order guarantees of stochastic gradient descent in nonconvex optimization,''
\newblock {\em IEEE Transactions on Automatic Control}, vol. 67, no. 12, pp. 6489--6504, 2022.

\end{thebibliography}

\end{document}